\title{Macformer: Transformer with Random Maclaurin Feature Attention}
\author{
    Yuhan Guo,
    Lizhong Ding,
    Ye Yuan,
    Guoren Wang
}
\newtheorem{theorem}{Theorem}
\begin{document}

\maketitle

\begin{abstract}
Random feature attention (RFA) adopts random fourier feature (RFF) methods to approximate the softmax function, resulting in a linear time and space attention mechanism that enables the construction of an efficient Transformer. Inspired by RFA, we propose Macformer, a Transformer architecture that employs random Maclaurin features (RMF) to approximate various dot-product kernels, thereby accelerating attention computations for long sequence. Macformer consists of Random Maclaurin Feature Attention (RMFA) and pre-post Scaling Batch Normalization (ppSBN), the former is an unbiased approximation for dot-product kernelized attention and the later is a two-stage regularization mechanism guaranteeing the error of RMFA. We conducted toy experiments to demonstrate the efficiency of RMFA and ppSBN, and experiments on long range arena (LRA) benchmark to validate the acceleration and accuracy of Macformer with different dot-product kernels. Experiment results of Macformer are consistent with our theoretical analysis.
\end{abstract}

\section{Introduction}

Recently, the Transformer model architecture \cite{transformer} has demonstrated significant success across various deep learning tasks \cite{Asurvey}, such as computer vision \cite{cv1,cv2,cv3}, natural language processing \cite{nlp1,nlp2,nlp3}, multi-modality \cite{mm1,mm2,mm3} and signal processing \cite{sp1,sp2}. The self-attention mechanism \cite{attention} for modeling sequence correlations is a key component of the Transformer, with a time complexity of \( O(n^2) \), which leads to unacceptable overhead when processing long sequence data.

Random feature methods \cite{randomFeature1,randomFeature2} embed the nonlinear feature space (i.e., the Reproducing Kernel Hilbert Space) into a low-dimensional Euclidean space, allowing approximating the kernel function by inner product of random feature mappings, at the cost of introducing an error related to the dimensionality of the random feature space. For some kernel functions with high computational complexity, using random feature map can significantly reduce their computational cost \cite{randomSpeed1,randomSpeed2}. There has been a significant amount of work on improving Softmax attention by approximating shift-invariant kernels using random fourier feature (RFF) \cite{performer,skyformer}. A common approach involves extracting the Gaussian kernel in the exponential operation and approximating it with RFF to linearize Softmax attention \cite{rfa}. We observe that using random Maclaurin features (RMF) \cite{rmf} to approximate dot product kernels can directly linearize Softmax attention, and various dot product kernels can be selected to measure sequence correlations, with softmax being a special case when choosing exponential function.

We propose the Macformer architecture, which introduces the novel Random Maclaurin Feature Attention (RMFA) and pre-post Scaling Batch Normalization (ppSBN) mechanisms in the attention layer. RMFA offers superior computational efficiency for longer sequences, with ppSBN providing its guarantee. Macformer allows us to explore attention improvement methods with RMF from the perspective of dot-product kernels. Our experiments first evaluate the performance of RMFA and ppSBN within the Macformer architecture, demonstrating their effectiveness. Then we validated Macformer on various real-world long sequence datasets provided by the long range arena (LRA) benchmark \cite{lra}. By comparing the performance of Macformer with different dot-product kernels and other models, we demonstrated competitive results, proving different effectiveness for various kernels of the Macformer architecture.

\section{Related Work}

One common motivation shared across several studies is the need to scale transformers to process long sequences efficiently. To address this limitation, various approaches have been proposed. 

Softmax-Free Linear Transformers \cite{rel1} aim to approximate self-attention at linear complexity by replacing the dot-product similarity with a Gaussian kernel function. This innovation allows for a full self-attention matrix to be approximated under low-rank matrix decomposition. Linformer \cite{linformer} approximates the self-attention mechanism with a low-rank matrix, reducing its complexity from $O(n^2)$ to $O(n)$. Methods such as Swin Transformer \cite{rel2} have explored shifted window-based self-attention to reduce computational costs. However, while Swin Transformer addresses some of the challenges associated with traditional transformers, it may still struggle when training on small datasets. ESwin Transformer \cite{rel3} further improves upon this by redesigning the modules of Swin Transformer and introducing simple convolutional components, enhancing the model's performance on small datasets.

Sparse attention patterns have been another area of focus for optimizing transformer models \cite{rel8,rel9} by limiting the reception field of attention computation. Efficient Transformer models in domains such as speech recognition \cite{rel4}, audio tagging \cite{rel5}, and remote sensing image segmentation \cite{rel6} often leverage techniques like knowledge distillation, quantization, and sparse attention \cite{rel7} patterns. Informer \cite{informer} introduces the ProbSparse self-attention mechanism and self-attention distillation techniques, reducing time complexity and memory usage to $O(n\log{n})$.

Random feature attention (RFA) \cite{rfa} proposes reducing computational costs by decomposing Softmax into linear operations. This method uses random feature map \cite{randomFeature1} to approximate the attention. A dissection of Transformer \cite{dissection} introduces a new computational approach by viewing Transformer’s attention mechanism as a kernel smoother, replacing Softmax with the product of symmetric kernels. Performer \cite{performer} introduces the FAVOR+ method to reduce the computational complexity of full-rank Softmax attention to linear, providing theoretical guarantees of low estimation variance and unbiased estimation. Skyformer \cite{skyformer} replaces Softmax with a Gaussian kernel and applies the Nyström method to accelerate computation. Spectraformer \cite{spectraformer} tries to present a unified framework for approximating and learning the shift invariant kernel function in linearized attention.

\section{Preliminaries}

In this section, we introduce Softmax and kernelized attention, as well as RMF. In this paper, the input to the attention mechanism, namely the query, key, and value matrices, is denoted as $\bm{Q},\bm{K},\bm{V}\in \mathbb{R}^{n\times d}$.

\subsection{Softmax and Kernelized Attention}

The attention mechanism \cite{attention} is the key component of the Transformer architecture distinguishing it from other deep-learning model architectures, and it is pivotal to its success. Currently, various variants of attention have emerged in practical applications \cite{attentionSurvey}. This paper focuses on improving traditional Softmax attention \cite{transformer} which is the most widely used and fundamental version.

\textbf{Definition 1} (Softmax Attention) The Softmax attention first measured the similarity between \( \bm{Q} \) and \( \bm{K} \), constructing a convex combination of \( \bm{V} \) as output of the attention layer:

\begin{equation}
{\rm attn}_{\rm Softmax}(\bm{Q},\bm{K},\bm{V})={\rm Softmax}\left ( \frac{\bm{Q}\bm{K}^\top}{\sqrt{d} } \odot \bm{M} \right ) \bm{V},
\nonumber
\end{equation}

\noindent \textit{where $\bm{M}\in \mathbb{R}^{n\times n}$ is an optional mask matrix and $\odot$ means maskfill operation,}

\begin{equation}
{\rm Softmax}(x)=\frac{{\rm exp}(x)}{ {\textstyle \sum_{i}{\rm exp}(x_i)} }.
\nonumber
\end{equation}

It is obvious that the computation of Softmax attention has a time complexity of \( O(n^2) \) for each prediction. When modeling long sequence, the attention layer incurs significant time overhead, becoming a performance bottleneck of the model.

Notably, in the softmax function, the exponential operation serves as the \textbf{similarity function} between the $\bm{Q}$ and $\bm{K}$ sequences. However, the most suitable similarity function can vary across different task scenarios, so it would be beneficial to generalize the exponential operation into a kernel function \( \mathcal{K} \).

\begin{figure*}[htb]
\centering
\includegraphics[width=1\textwidth]{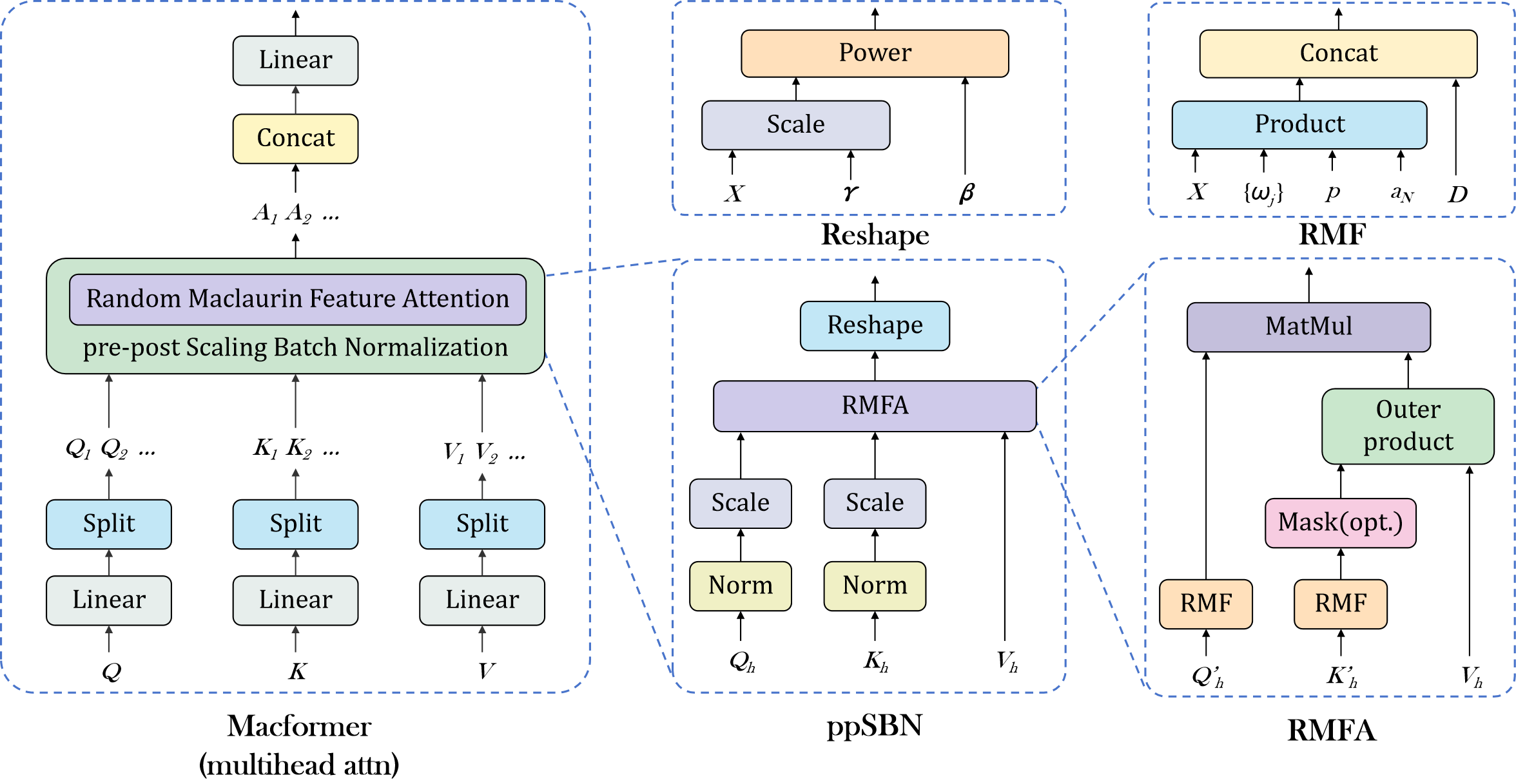}
\caption{The Macformer architecture improves the multi-head attention component of the original Transformer, with RMFA being wrapped by the preSBN and postSBN layers.}
\label{macformer}
\end{figure*}

\textbf{Definition 2} (Kernelized Attention) Given a kernel function \( \mathcal{K}(x, y) \), kernelized attention is defined as the linear combination of \( \bm{V} \) weighted by the normalized \( \mathcal{K}(\bm{Q}, \bm{K}) \).

\begin{equation}
{\rm attn}_{K}(\bm{Q},\bm{K},\bm{V})={ \sum_{i=1}^{n}\frac{\mathcal{K}(\bm{Q} \bm{K}_i^\top/\sqrt{d}) \bm{V} }{ \sum_{j=1}^{n}\mathcal{K}(\bm{Q} \bm{K}_j^\top/\sqrt{d}) } }.
\nonumber
\end{equation}

When \(\mathcal{K}(x,y)=\exp(x\cdot y)\), kernelized attention is equivalent to Softmax attention (we will provide the masked form in the next section). Since the value of the kernel function can be negative, the attention in this case may not be a convex combination of the values.

\subsection{Random Maclaurin Feature}

Previous studies have primarily taken \( \mathcal{K}(x, y) \) as a shift-invariant kernel $\mathcal{K}(x\!-\!y)$ represented by Gaussian kernels and used RFF for approximation. However, in Softmax attention, \( \mathcal{K}(x, y) \) is inherently a dot-product kernel $\mathcal{K}(x\cdot y)$, which can be directly approximated by RMF \cite{rmf}.

\textbf{Definition 3} (RMF)  The Random Maclaurin Feature projection \( \Phi(\cdot) \) maps the input of a dot-product kernel with non-negative Maclaurin coefficients to a random feature vector.

\textit{Let $\Phi (\cdot ):\mathbb{R}^d \to \mathbb{R}^D$ be a nonlinear transformation:}

\begin{equation}
\label{Phi}
\Phi (x)=\sqrt{1/D} \left [ \phi_1 (x),\dots,\phi_D (x) \right ],
\nonumber
\end{equation}

\noindent \textit{where $\phi_i(\cdot):\mathbb{R}^d \to \mathbb{R}$ is a non-linear mapping independent of $i$:}

\begin{equation}
\label{phi}
\phi_i (x)=\sqrt{a_N p^{N+1}}  {\textstyle \prod_{j=1}^{N} \left < \omega_j,x\right > } ,
\nonumber
\end{equation}

\noindent \textit{where $p>1$ is a hyperparameter, \( N \) is sampled from \( \mathbb{N} \) with $\mathbb{P}[N=\eta]=\frac{1}{p^{\eta+1}} $, \( a_N \) is the \( N \)th coefficient of the Maclaurin expansion of the kernel function \( \mathcal{K}(\cdot) \), and \( \omega_j \) is a \( d \)-dimensional Rademacher vector independent of \( j \). }

\textit{Then we can approximate kernel with random features:}

\begin{equation}
\Phi (x)\cdot \Phi (y)\approx{\mathcal{K}(x\cdot y)}.
\nonumber
\end{equation}

The $\Phi (\cdot)$ defined here can also perform random projection on high-dimensional tensors, acting on the last dimension. Under specific circumstances, the dot product of the mapped random features is theoretically an unbiased estimate of the original dot product kernel $\mathcal{K}(x,y)=\mathcal{K}(x\cdot y)$. RMF directly utilize random features to unbiasedly approximate any dot product kernel function with only non-negative Maclaurin coefficient \cite[Lemma 7]{rmf}. In the Macformer architecture, our core work is to approximate dot-product kernelized attention with RMF, modifying the computational graph thus achieving better performance on long-sequence scenario.

\section{Macformer}

In this section, we introduce the Macformer architecture, as shown in Figure \ref{macformer}, an improved Transformer model with a better performance on long sequence scenarios. 

Similar to Transformer, Macformer accepts a sequence of data, processing it through layers with multihead attention. But Macformer applies the ppSBN mechanism before and after RMFA. We then demonstrate functions and theoretical analysis for its two main components.

\begin{figure*}[htb]
\centering
  \subfloat[Softmax Attention]
  {
      \label{attnCal}\includegraphics[width=0.5\textwidth]{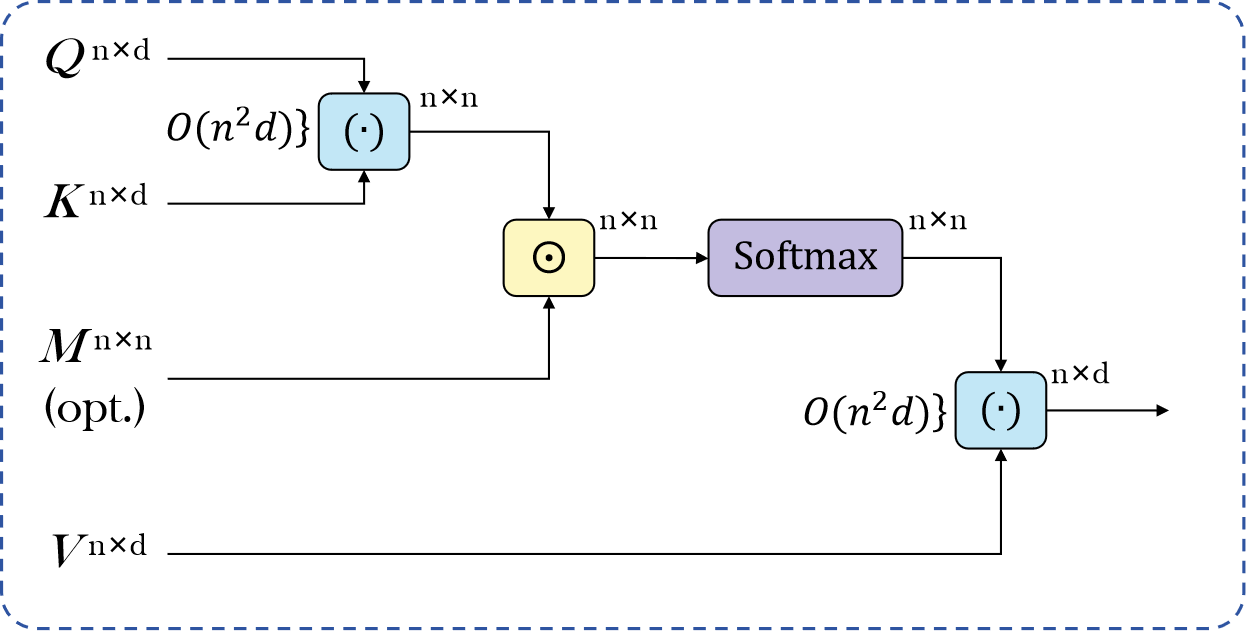}
  }
  \subfloat[RMFA]
  {
      \label{rmfaCal}\includegraphics[width=0.5\textwidth]{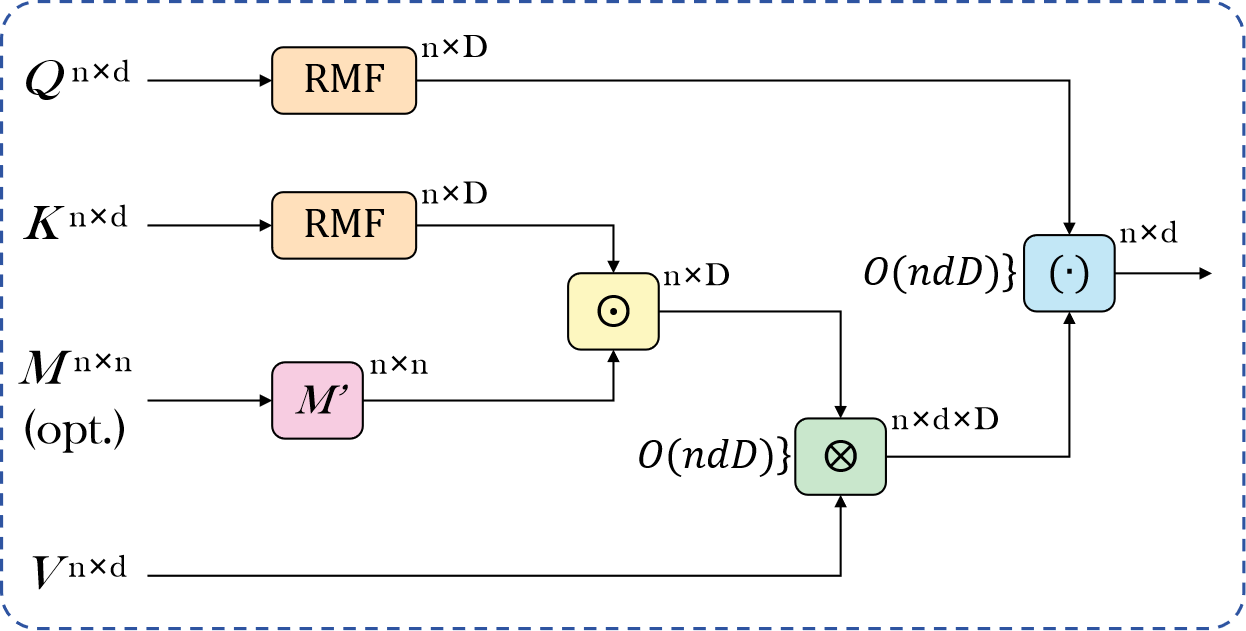}
  }
\caption{Computation graphs for Softmax attention and RMFA. In each figure, the data on the left represents the input to the attention layer. Here, operators $(\cdot)$, $\odot$, and $\otimes$ respectively denote matrix multiplication, mask fill, and outer product. The main time complexity caused by computations is marked on the left side of the operators, and the dimensions of data and intermediate results are indicated with superscripts.}
\label{calPro}
\end{figure*}

\subsection{Random Maclaurin Feature Attention}

We begin with the fact that any dot-product kernel with non-negative Maclaurin coefficients can be approximated by RMF, the same applies to kernelized attention.

\begin{equation}
\mathcal{K}(\bm{Q} \bm{K}^\top/\sqrt d)\approx{\Phi(\bm{Q}/d^{\frac{1}{4}}) \Phi^\top(\bm{K}/d^{\frac{1}{4}})}.
\nonumber
\end{equation}

Similar to RFA \cite{rfa}, which improves the attention computation path with Gaussian kernels, we approximate the dot-product kernel with RMF, allowing for the linear time approximation of attention.

When approximation kernelized attention, we have

\begin{align*}
\label{rmfa}
\begin{split}
&\quad {\rm attn}_\mathcal{K}(\bm{Q},\bm{K},\bm{V}) \\
&={ \sum_{i=1}^{n}\frac{\mathcal{K}(\bm{Q} \bm{K}_i^\top/\sqrt{d}\odot \bm{M})\bm{V} }{ \sum_{j=1}^{n}\mathcal{K}(\bm{Q} \bm{K}_j^\top/\sqrt{d}\odot \bm{M}) } }\\
&={ \sum_{i=1}^{n}\frac{[\mathcal{K}(\bm{Q} \bm{K}_i^\top/\sqrt{d})\odot \bm{M}'] \bm{V} }{ \sum_{j=1}^{n}\mathcal{K}(\bm{Q} \bm{K}_j^\top/\sqrt{d})\odot \bm{M}' } }\\
&\approx{{ \sum_{i=1}^{n}\frac{[\Phi(\bm{Q}/d^{\frac{1}{4}}) \Phi^\top(\bm{K}_i/d^{\frac{1}{4}})\odot \bm{M}'] \bm{V} }{ \sum_{j=1}^{n}\Phi(\bm{Q}/d^{\frac{1}{4}}) \Phi^\top(\bm{K}_j/d^{\frac{1}{4}})\odot \bm{M}' } }}\\
&=\frac{ \Phi (\bm{Q}/d^{\frac{1}{4}})  {\textstyle \sum_{i=1}^{n}\left [ \Phi^\top (\bm{K}_i/d^{\frac{1}{4}})\odot \bm{M}' \right ]\otimes \bm{V}}}{ \Phi (\bm{Q}/d^{\frac{1}{4}}) \sum_{j=1}^{n} \left [ \Phi ^\top(\bm{K}_j/d^{\frac{1}{4}})\odot \bm{M}' \right ] }\\
&={\rm RMFA}_\mathcal{K}(\bm{Q},\bm{K},\bm{V}),
\end{split}
\end{align*}

\noindent \textit{where $\otimes$ represents the outer product of vectors. $\bm{M}$ is converted to \( \bm{M}'\in \mathbb{R}^{n\times n} \) as follows, making specified positions of the attention matrix invalid:}

\begin{equation}
\bm{M}'_{i,j}=\begin{cases}
1,\bm{M}_{i,j}\ge 0
 \\0,\bm{M}_{i,j}<0
\end{cases}.
\nonumber
\end{equation}

RMFA serves as a drop-in replacement of Softmax attention, directly integrating into existing transformer architectures without significant modifications. By approximating the Softmax operation with RMF, RMFA avoids the direct multiplication of large matrices \(\bm{Q}\) and \(\bm{K}\), thereby saving computational costs. 

We can analyze the time complexity of RMFA compared to Softmax attention from Figure \ref{calPro}. 

In Figure \ref{attnCal}, the direct multiplication of large matrices \(\bm{Q}\) and \(\bm{K}\) results in a time complexity of \(O(n^2d)\). After passing through the Softmax function, another large matrix multiplication costing \(O(n^2d)\) is required to obtain the attention output. 

On the contrary, as shown in Figure \ref{rmfaCal}, RMFA avoids large matrix multiplications by decomposing the dot-product kernel into a product of random features that can be factored out, results in a time complexity of \(O(ndD)\). In long sequence tasks where \( n \gg D \), RMFA can significantly improve computational efficiency.

Due to the introduction of randomness by the random feature method, theoretical guarantees are required for the approximation effectiveness of RMFA. 

Let $\ell_\rho (0,\lambda )$ denotes the set $\{ {\left \| \bm{X} \right \|}_\rho \le \lambda \}$. According to Schoenberg (1942), Theorem 2, when $\bm{Q},\bm{K}\in \ell_2 (0,1)$, RFMA serves as an unbiased estimator for the dot-product kernelized attention.

\begin{theorem}
\label{expTheo}
\textit{Suppose attention inputs $\bm{Q},\bm{K}\in \ell_2 (0,1)$, $ \Phi (\cdot)$ defines a Random Maclaurin Feature map for a dot-product kernel $\mathcal{K}(\cdot)$, then for every $\bm{V}\subset \mathbb{R}^{n \times d}$, we have $\mathbb{E}[{\rm RMFA}_\mathcal{K}(\bm{Q},\bm{K},\bm{V})]={\rm attn}_\mathcal{K}(\bm{Q},\bm{K},\bm{V})$.}
\end{theorem}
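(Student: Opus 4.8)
The plan is to peel off the normalization and the mask, reduce the whole statement to a single scalar identity — the unbiasedness of one entry of the approximate kernel matrix — and then propagate it back by linearity of expectation.

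First I would fix a query row \(t\) and write \(q_t,k_i,\bm V_i\) for the \(t\)th row of \(\bm Q\), the \(i\)th row of \(\bm K\), and the \(i\)th row of \(\bm V\). The \(t\)th row of \({\rm RMFA}_\mathcal{K}(\bm Q,\bm K,\bm V)\) then equals
\[
\frac{\sum_{i=1}^n m'_{ti}\,s_{ti}\,\bm V_i}{\sum_{j=1}^n m'_{tj}\,s_{tj}},\qquad s_{ti}:=\big\langle \Phi(q_t/d^{1/4}),\,\Phi(k_i/d^{1/4})\big\rangle,
\]
where the \(m'_{ti}\in\{0,1\}\) are the deterministic entries of \(\bm M'\), and the corresponding row of \({\rm attn}_\mathcal{K}\) is the same expression with \(s_{ti}\) replaced by \(\kappa_{ti}:=\mathcal{K}(q_t\cdot k_i/\sqrt d)\). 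Since the mask enters only as a deterministic \(0/1\) multiplier, the statement reduces to showing that the numerator and denominator of the RMFA row have the same expectations as those of the attention row, which by linearity of expectation follows from the single identity \(\mathbb{E}[s_{ti}]=\kappa_{ti}\).

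Next I would establish \(\mathbb{E}[s_{ti}]=\kappa_{ti}\). By definition \(s_{ti}=\tfrac1D\sum_{m=1}^D \phi_m(q_t/d^{1/4})\,\phi_m(k_i/d^{1/4})\) with the \(\phi_m\) i.i.d., so it suffices to compute \(\mathbb{E}[\phi_1(x)\phi_1(y)]\) for \(x=q_t/d^{1/4}\), \(y=k_i/d^{1/4}\). Conditioning on the order \(N\) and using that the Rademacher vectors \(\omega_j\) are independent with \(\mathbb{E}[\langle\omega_j,x\rangle\langle\omega_j,y\rangle]=\langle x,y\rangle\) gives \(\mathbb{E}\big[\prod_{j=1}^{N}\langle\omega_j,x\rangle\langle\omega_j,y\rangle\mid N\big]=\langle x,y\rangle^{N}\); averaging over \(N\) with \(\mathbb{P}[N=\eta]=p^{-(\eta+1)}\) cancels the normalizing factor \(a_N p^{N+1}\) and leaves \(\sum_{\eta\ge 0}a_\eta\langle x,y\rangle^{\eta}=\mathcal{K}(\langle x,y\rangle)\) by the Maclaurin expansion of \(\mathcal{K}\). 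Here the hypothesis \(\bm Q,\bm K\in\ell_2(0,1)\) together with the \(d^{1/4}\) rescaling is exactly what makes the interchange legitimate: each row has Euclidean norm at most \(1\), so \(|\langle x,y\rangle|\le \|x\|\,\|y\|\le d^{-1/2}\le 1\), and since the Maclaurin coefficients \(a_\eta\) are non-negative the series converges absolutely, so Fubini justifies swapping \(\mathbb{E}_N\) with the sum; this is precisely the regime of Schoenberg (1942), Theorem 2, and of \cite[Lemma 7]{rmf}. Finally \(\langle q_t/d^{1/4},k_i/d^{1/4}\rangle=q_t\cdot k_i/\sqrt d\), so \(\mathbb{E}[s_{ti}]=\mathcal{K}(q_t\cdot k_i/\sqrt d)=\kappa_{ti}\).

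Then I would assemble: by linearity the expected numerator of the RMFA row is \(\sum_i m'_{ti}\kappa_{ti}\bm V_i\) and the expected denominator is \(\sum_j m'_{tj}\kappa_{tj}\), which are exactly the numerator and denominator of the \(t\)th attention row, and ranging over all \(t\) yields the claimed identity. The step I expect to require the most care is the normalization: in general \(\mathbb{E}[\,\text{num}/\text{den}\,]\neq \mathbb{E}[\text{num}]/\mathbb{E}[\text{den}]\), so the argument must either be phrased at the level of the unnormalized kernel matrix \(\Phi(\bm Q/d^{1/4})\Phi^\top(\bm K/d^{1/4})\) — whose entrywise unbiasedness is precisely \(\mathbb{E}[s_{ti}]=\kappa_{ti}\) — and the equality of the two random sums' means propagated as above, or else it needs an extra argument exploiting that numerator and denominator are formed from the \emph{same} draw of \(\Phi\). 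The remaining ingredients — the mask, the \(d^{1/4}\) scaling, the cancellation of \(p^{\eta+1}\) against \(\mathbb{P}[N=\eta]\), and the Rademacher second-moment computation — are routine once the \(\ell_2(0,1)\) assumption is in force.
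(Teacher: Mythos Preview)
Your approach is essentially the paper's: both reduce the claim to the scalar identity \(\mathbb{E}[\phi(x)\phi(y)]=\mathcal{K}(\langle x,y\rangle)\) via conditioning on \(N\), the Rademacher second-moment identity \(\mathbb{E}[\langle\omega,x\rangle\langle\omega,y\rangle]=\langle x,y\rangle\), and the cancellation of \(p^{\eta+1}\) against \(\mathbb{P}[N=\eta]\) to recover the Maclaurin series. The concern you flag about the normalization is well placed --- the paper's proof simply writes \(\mathbb{E}[\text{numerator}/\text{denominator}]\) as \(\mathbb{E}[\text{numerator}]/\mathbb{E}[\text{denominator}]\) in its very first displayed equality without comment, so the step you singled out as requiring the most care is left unjustified there as well; your instinct to phrase the unbiasedness at the level of the unnormalized kernel matrix \(\Phi(\bm Q/d^{1/4})\Phi^\top(\bm K/d^{1/4})\) is the honest version of what the theorem can actually deliver.
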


\begin{proof}
We have
\begin{align*}
\nonumber
&\quad \mathbb{E}[{\rm RMFA}_\mathcal{K}(\bm{Q},\bm{K},\bm{V})] \\
&=\sum_{i=1}^{n}\frac{\mathbb{E} \left [ \frac{1}{D} {\textstyle \sum\phi(\bm{Q}/d^{\frac{1}{4}})\phi^\top(\bm{K}_i/d^{\frac{1}{4}})} \odot \bm{M}' \right ] \bm{V} }{ \sum_{j=1}^{n}\mathbb{E} \left [ \frac{1}{D} {\textstyle \sum\phi(\bm{Q}/d^{\frac{1}{4}})\phi^\top(\bm{K}_j/d^{\frac{1}{4}})} \odot \bm{M}' \right ] }\\
&=\sum_{i=1}^{n}\frac{\!\!\mathbb{E}_N\!\! \left [{a_N p^{N+1} \mathbb{E}_\omega \left [ \prod \left \langle \omega ,\bm{Q} \right \rangle \left \langle \omega ,\bm{K}_i \right \rangle \right /\sqrt d]} \odot \bm{M}' \right ] \bm{V} }{ \sum_{j=1}^{n}\!\!\mathbb{E}_N\!\! \left [ {a_N p^{N+1} \mathbb{E}_\omega \left [ \prod \left \langle \omega ,\bm{Q}\right \rangle \left \langle \omega ,\bm{K}_j\right \rangle  \right /\!\!\sqrt d] }\! \odot \bm{M}'\! \right ] }\\
&=\sum_{i=1}^{n}\frac{\mathbb{E}_N \left [{a_N p^{N+1} {(\bm{Q} \bm{K}_i^\top/\sqrt d)^N}} \odot \bm{M}' \right ] \bm{V} }{ \sum_{j=1}^{n}\mathbb{E}_N \left [ {a_N p^{N+1} (\bm{Q} \bm{K}_j^\top/\sqrt d)^N} \odot \bm{M}' \right ] }\\
&=\sum_{i=1}^{n}\frac{ {\textstyle \sum_{\eta =0}^{\infty}a_\eta \frac{1}{p^{\eta+1}}p^{\eta+1}(\bm{Q} \bm{K}_i^\top/{\sqrt d})^\eta } \odot \bm{M}' \bm{V} }{ \sum_{j=1}^{n} {\left [ \textstyle \sum_{\eta =0}^{\infty}a_\eta \frac{1}{p^{\eta+1}}p^{\eta+1}(\bm{Q} \bm{K}_j^\top/{\sqrt d})^\eta \right ] } \odot \bm{M}' } \\
&={ \sum_{i=1}^{n}\frac{\mathcal{K}(\bm{Q} \bm{K}_i^\top/\sqrt{d}\odot \bm{M}) \bm{V} }{ \sum_{j=1}^{n}\mathcal{K}(\bm{Q} \bm{K}_j^\top/\sqrt{d}\odot \bm{M}) } }={\rm attn}_\mathcal{K}(\bm{Q},\bm{K},\bm{V}).
\end{align*}
\end{proof}

Theorem \ref{expTheo} indicates that as the number of computations approaches infinity, the mean of RMFA equals kernelized attention. However, considering efficiency concerns, we only compute RMFA once every iteration, which poses a requirement for the stability of the approximation effectiveness of RMFA.

Thus, we provide a theoretical guarantee for the approximation error bound of RMFA when the scale of \( \bm{V} \) is limited.

\begin{theorem}
\label{pacTheo}
\textit{Suppose attention inputs $\bm{Q}\in \ell_2 (0,1)$, $\bm{K}\in \ell_2 (0,1)$ and $\left | \bm{V}_{ij} \right | \le L$, for any $\varepsilon >0$, we have}
\end{theorem}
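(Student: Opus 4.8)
The plan is to supply the missing right-hand side of Theorem~\ref{pacTheo} as a Hoeffding-type concentration bound, of the shape $\mathbb{P}\big[\big\|{\rm RMFA}_\mathcal{K}(\bm{Q},\bm{K},\bm{V})-{\rm attn}_\mathcal{K}(\bm{Q},\bm{K},\bm{V})\big\|_\infty\ge\varepsilon\big]\le C\,nd\exp\!\big(-c\,D\varepsilon^2/(n^2L^2)\big)$, and to prove it entrywise. First I would fix a single output coordinate and record, via Theorem~\ref{expTheo}, that its expectation is exactly the matching entry of ${\rm attn}_\mathcal{K}$. The key structural observation is that, since $\Phi$ is assembled from $D$ independent copies of the pair $(N,\omega)$, the $i$-th row of ${\rm RMFA}_\mathcal{K}$ is a ratio $\bar A/\bar B$ of two empirical means $\bar A=\tfrac1D\sum_{t=1}^{D}A_t$ and $\bar B=\tfrac1D\sum_{t=1}^{D}B_t$, where $A_t=\sum_{j}\phi^{(t)}(\bm{Q}_i/d^{1/4})\phi^{(t)}(\bm{K}_j/d^{1/4})\bm{M}'_{ij}\bm{V}_j$ and $B_t=\sum_{j}\phi^{(t)}(\bm{Q}_i/d^{1/4})\phi^{(t)}(\bm{K}_j/d^{1/4})\bm{M}'_{ij}$ are i.i.d.\ over $t$, with $\mathbb{E}[A_t]=a$, $\mathbb{E}[B_t]=b$, and $a/b$ equal to the target entry.

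Next I would control the ratio deviation through the identity $\frac{\bar A}{\bar B}-\frac ab=\frac{b(\bar A-a)+a(b-\bar B)}{b\,\bar B}$: on the event $\{|\bar B-b|\le|b|/2\}$ one has $|\bar B|\ge|b|/2$, hence $\big|\frac{\bar A}{\bar B}-\frac ab\big|\le\frac{2}{|b|}|\bar A-a|+\frac{2|a|}{|b|^2}|\bar B-b|$. It then suffices to make $|\bar A-a|$ and $|\bar B-b|$ small with high probability and to bound $|b|$ from below. Because $\bm Q,\bm K\in\ell_2(0,1)$, the rescaling by $d^{1/4}$ forces $|\bm Q_i\bm K_j^\top/\sqrt d|\le1$, and combined with non-negativity and summability of the Maclaurin coefficients this yields an almost-sure bound $\big|\phi^{(t)}(\bm Q_i/d^{1/4})\phi^{(t)}(\bm K_j/d^{1/4})\big|\le\kappa$ for a constant $\kappa$ depending only on $\mathcal K$ and $p$; consequently $|A_t|\le n\kappa L$ and $|B_t|\le n\kappa$. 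Hoeffding's inequality for bounded i.i.d.\ sums then gives $\mathbb{P}[|\bar A-a|\ge s]\le 2\exp\!\big(-Ds^2/(2n^2\kappa^2L^2)\big)$ and the analogous bound for $\bar B$.

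Finally I would pick the thresholds $s$ proportional to $|b|\varepsilon$ and to $|b|^2\varepsilon/|a|$, so that off the bad events the entrywise error drops below $\varepsilon$, combine the numerator-deviation, denominator-deviation, and ``good denominator'' events by a union bound, then union over all $nd$ output entries and absorb the resulting prefactors into $C,c$. The main obstacle I expect is not the concentration step, which is routine, but the ratio handling: the normalizer $b=\sum_j\mathcal K(\bm Q_i\bm K_j^\top/\sqrt d)\bm M'_{ij}$ need not be bounded away from zero (nor even positive) for a general dot-product kernel, so the clean estimate above needs an extra quantitative assumption on $|b|$ — exactly the gap that the pre/post scaling batch normalization is meant to close. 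A secondary technical point is justifying the uniform bound $\kappa$ despite $N$ being unbounded, which requires the Maclaurin series of $\mathcal K$ to converge on a disk slightly larger than radius $p$; if that fails I would instead invoke Bernstein's inequality using finiteness of the second moment $\mathbb{E}[\phi^2]$ implied by Theorem~\ref{expTheo}.
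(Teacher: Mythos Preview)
Your proposal takes a genuinely different, and more careful, route than the paper. The paper's stated bound is
\[
\mathbb{P}\big(\big|{\rm RMFA}_\mathcal{K}(\bm{Q},\bm{K},\bm{V})-{\rm attn}_\mathcal{K}(\bm{Q},\bm{K},\bm{V})\big|>\epsilon\big)\le 2D\,\exp\!\Big(-\tfrac{D\epsilon^2}{2L^2}\Big),
\]
with no $n$-dependence whatsoever. Their argument is very short: from $|\bm V_{ij}|\le L$ and the fact that every output entry of ${\rm RMFA}$ is a normalized weighted combination of entries of $\bm V$, they record ${\rm RMFA}_\mathcal{K}(\bm{Q},\bm{K},\bm{V})_{ij}\in[-L,L]$; they then replace the deviation event by $\{\exists\,t\le D:\text{the single-feature ratio built from }\phi_t\text{ deviates by more than }\epsilon\}$, union-bound over the $D$ features, and invoke Hoeffding with range $2L$.

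By contrast, you explicitly decompose the ratio $\bar A/\bar B-a/b$, concentrate the unnormalized numerator and denominator separately, and flag the need to lower-bound $|b|$; you end up with $n^2L^2$ in the exponent and an $nd$ prefactor from the entrywise union. What you buy is a logically sound treatment of the ratio: ${\rm RMFA}$ is a ratio of averages over $t$, not an average of the per-$t$ ratios, so the containment the paper uses (``the overall ratio is off $\Rightarrow$ some single-feature ratio is off'') does not actually hold, and a direct Hoeffding on ${\rm RMFA}$ itself is not available either because the $D$ summands in numerator and denominator are coupled through the common normalization. The paper's shortcut yields a much cleaner constant but glosses over exactly the obstacle you isolated; your approach is the honest one, and the extra lower bound on $|b|$ you require is genuinely needed for any rigorous ratio concentration of this type. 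Your remark on the almost-sure bound $\kappa$ is also well taken: it fails for, e.g., $\mathcal K_{\rm inv}$ with $p=2$, where $a_Np^{N+1}=2^{N+1}$ is unbounded, so the Bernstein fallback you suggest is the right contingency.
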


\begin{equation}
\label{pac}
\begin{split}
&\quad \mathbb{P}\left ( \left | {\rm RMFA}_\mathcal{K}(\bm{Q},\bm{K},\bm{V}) -{\rm attn}_\mathcal{K}(\bm{Q},\bm{K},\bm{V}) \right | > \epsilon \right ) \\
&\le 2D\ {\rm exp}\left (- \frac{D\epsilon ^2}{2L^2}  \right ) .
\end{split}
\nonumber
\end{equation}

\begin{proof}
We begin with
\begin{equation}
\forall \bm{Q},\bm{K}\in  \ell_2 (0,1),\left | \bm{V}_{ij} \right | \le L,
\nonumber
\end{equation}
\begin{equation}
{\rm RMFA}_\mathcal{K}(\bm{Q},\bm{K},\bm{V})_{ij}\in [-L,L],
\nonumber
\end{equation}

then, using Hoeffding's inequality,

\begin{align*}
&\quad \mathbb{P}\left ( \left | {\rm RMFA}_\mathcal{K}(\bm{Q},\bm{K},\bm{V}) -{\rm attn}_\mathcal{K}(\bm{Q},\bm{K},\bm{V}) \right | > \epsilon \right ) \\
&=\mathbb{P}\Bigg ( \exists\phi_t:\Bigg | \sum_{i=1}^{n}\frac{ \left [ {\phi_t(\bm{Q}/d^\frac{1}{4} )\phi_t(\bm{K}_i/d^\frac{1}{4})} \odot \bm{M}' \right ]  \bm{V} }{ \sum_{j=1}^{n} {\phi_t(\bm{Q}/d^\frac{1}{4})\phi_t(\bm{K}_j/d^\frac{1}{4})} \odot \bm{M}'} \\
&\quad-{\rm attn}_\mathcal{K}(\bm{Q},\bm{K},\bm{V}) \Bigg | >\epsilon   \Bigg ) \\
&\le {\textstyle \sum_{t=1}^{D}} \mathbb{P}\Bigg ( \Bigg | \sum_{i=1}^{n}\frac{ \left [ {\phi_t(\bm{Q}/d^\frac{1}{4})\phi_t(\bm{K}_i/d^\frac{1}{4})} \odot \bm{M}' \right ]  \bm{V} }{ \sum_{j=1}^{n} {\phi_t(\bm{Q}/d^\frac{1}{4})\phi_t(\bm{K}_j/d^\frac{1}{4})} \odot \bm{M}'} \\
&\quad-{\rm attn}_\mathcal{K}(\bm{Q},\bm{K},\bm{V}) \Bigg | >\epsilon   \Bigg )\\
&\le D\cdot 2{\rm exp}\left [ -\frac{2D^2\epsilon ^2}{D(2L)^2}  \right ] =2D\ {\rm exp}\left (- \frac{D\epsilon ^2}{2L^2} \right ).
\nonumber
\end{align*}
\end{proof}

Thereby, we have demonstrated that RMFA is equivalent to kernelized attention in expectation and that the approximation performance theoretically improves as $D$ increases or $L$ decreases. In applications, one can choose appropriate parameters  to enable RMFA to serve as a replacement for kernelized attention in the model, and specifically select $\mathcal{K}(\cdot)=\rm{exp}(\cdot)$ for approximating Softmax attention.

\begin{table}[ht]
  \centering
    \begin{tabular}{ccc}
    \toprule
    $\mathcal{K}$     & $f(x\cdot y)$     & $a_N$ \\
    \midrule
    ${\rm exp}$   & ${\rm exp}(x\cdot y)$    & $1/N!$ \\
    ${\rm inv}$ & $1/(1\!-\!x\cdot y)$    & $1$ \\
    ${\rm log}$   & $1\!-\!{\rm log}(1\!-\!x\cdot y)$    & $\frac{1}{{\rm min}(1,N)}$ \\
    ${\rm trigh}$ &  ${\rm sinh}(x\cdot y)+{\rm cosh}(x\cdot y)$   & $1/N!$ \\
    ${\rm sqrt}$  & $2\!-\!\sqrt{1\!-\!x\cdot y}$    & $\frac{{\rm max}(1,2N-3)}{2^N N!}$ \\
    \bottomrule
    \end{tabular}
  \caption{The dot-product kernel functions we use in this paper along with their non-negative Maclaurin coefficients.}
\label{kernels}
\end{table}

In this work, we tested the performance of five dot-product kernels with non-negative Maclaurin coefficients, as shown in Table \ref{kernels}. It is worth noting that the domain of dot-product kernels \( \mathcal{K}_{\rm inv} \), \( \mathcal{K}_{\rm log} \), and \( \mathcal{K}_{\rm sqrt} \) requires limited inputs less than $1$, a condition that will be ensured also by the ppSBN mechanism.

\subsection{Pre-post Scaling Batch Normalization}

RMFA can directly replace Softmax attention in the original Transformer model without requiring other modifications to the architecture. However, due to the uncontrolled scaling of the inputs to the attention layer, the approximation performance may not be guaranteed. According to Theorem \ref{expTheo}, to strictly ensure that RMFA is an unbiased estimate of kernelized attention, the input space needs to be constrained to $\ell_2 (0,1)$[Schoenberg (1942), Theorem 2].

\begin{algorithm}[htb]
\caption{Pre-post Scaling Batch Normalization}
\label{ppSBN}
\textbf{Input}: attention input $\bm{Q}$, $\bm{K}$, $\bm{V}$\\
\textbf{Parameter}: trainable parameter $\beta$, $\gamma$; hyperparameter $\varepsilon$\\
\textbf{Output}: attention output $att$
\begin{algorithmic}[1] 
\STATE $\bm{Q},\bm{K}\gets \frac{\bm{Q}-\bm{\mu_Q}}{\sqrt{\bm{\sigma_Q}+\bm{\varepsilon}}},\frac{\bm{K}-\bm{\mu_K}}{\sqrt{\bm{\sigma_K}+\bm{\varepsilon}}}$
\STATE $\bm{Q},\bm{K}\gets \frac{\bm{Q}}{\left \| \bm{Q} \right \|_2 },\frac{\bm{K}}{\left \| \bm{K} \right \|_2 }$
\STATE $att\gets {\rm RMFA}_\mathcal{K}(\bm{Q},\bm{K},\bm{V})$
\STATE $att\gets {(\gamma \cdot att)}^\beta$
\end{algorithmic}
\end{algorithm}

Inspired by the idea of reshaping data expectations and variances through Batch Normalization \cite{batchNorm}, we propose a two-stage regularization method called ppSBN. 

As shown in Algorithm \ref{ppSBN}, we first normalize and scale $\bm{Q},\bm{K}$ to ensure that $\bm{Q}^{SBN},\bm{K}^{SBN}\in \ell_2 (0,1)$, then feed them into RMFA layer for computation. The mean value matrices $\bm{\mu_Q},\bm{\mu_K}$ and variance matrices $\bm{\sigma_Q},\bm{\sigma_K}$ are unsqueezed from vectors, and $\bm{\varepsilon}$ is the all-$\varepsilon$s matrix. The output of RMFA is rescaled by trainable parameters \( \beta \) and \( \gamma \), serving as the final output of the attention layer.We can mathematically demonstrate that when approximating \( {\rm attn}_{\rm exp} \), the ppSBN mechanism can restore the scale of the attention while ensuring that the input space remains constrained.

\begin{theorem}
\label{ppTheo}
\textit{Suppose $\bm{Q}^{SBN},\bm{K}^{SBN}$ denote $\bm{Q},\bm{K}$ after scaling and batch normalization, we have}
\begin{equation}
\label{ppsbn}
{\rm RMFA}_{\rm exp}(\bm{Q}^{SBN},\bm{K}^{SBN},\bm{V}) \approx{\frac{1}{t} {\rm RMFA}_{\rm exp}(\bm{Q},\bm{K},\bm{V})}^\frac{1}{r} ,
\nonumber
\end{equation}

\noindent where $r=\left \| \bm{Q} \right \|_2 \left \| \bm{K} \right \|_2 \sqrt{(\bm{\sigma_Q}+\bm{\varepsilon})(\bm{\sigma_K}+\bm{\varepsilon})}$ dependents on the data and $t={\rm Softmax}(\bm{\mu_Q} \bm{K}^\top/\sqrt{d})^{\frac{1}{r}}/\bm{V}^{\frac{r-1}{r}}$.
\end{theorem}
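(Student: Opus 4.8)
The plan is to reduce the claimed relation to an identity about exact softmax attention and then track how the two normalization steps of Algorithm~\ref{ppSBN} propagate through the exponential kernel. First I would invoke Theorem~\ref{expTheo}: since Step~2 of Algorithm~\ref{ppSBN} forces $\bm{Q}^{SBN},\bm{K}^{SBN}\in\ell_2(0,1)$, we have ${\rm RMFA}_{\rm exp}(\bm{Q}^{SBN},\bm{K}^{SBN},\bm{V})\approx{\rm attn}_{\rm exp}(\bm{Q}^{SBN},\bm{K}^{SBN},\bm{V})={\rm Softmax}(\bm{Q}^{SBN}\bm{K}^{SBN\top}/\sqrt d)\bm{V}$, and likewise ${\rm RMFA}_{\rm exp}(\bm{Q},\bm{K},\bm{V})\approx{\rm attn}_{\rm exp}(\bm{Q},\bm{K},\bm{V})$ (the Maclaurin series of $\exp$ converges everywhere, so unbiasedness holds even for the unnormalized inputs). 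This is the origin of the global ``$\approx$'' in the statement, so it remains to compare the two softmax attentions.

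The second step is to expand $\bm{Q}^{SBN}\bm{K}^{SBN\top}$. Composing Steps~1--2 of Algorithm~\ref{ppSBN} and absorbing the per-feature scale factors into the row norms, one writes $\bm{Q}^{SBN}\bm{K}^{SBN\top}\approx\frac1r(\bm{Q}-\bm{\mu_Q})(\bm{K}-\bm{\mu_K})^\top$ with $r=\|\bm{Q}\|_2\|\bm{K}\|_2\sqrt{(\bm{\sigma_Q}+\bm{\varepsilon})(\bm{\sigma_K}+\bm{\varepsilon})}$ exactly as stated. Expanding $(\bm{Q}-\bm{\mu_Q})(\bm{K}-\bm{\mu_K})^\top=\bm{Q}\bm{K}^\top-\bm{\mu_Q}\bm{K}^\top-\bm{Q}\bm{\mu_K}^\top+\bm{\mu_Q}\bm{\mu_K}^\top$, I would note that $\bm{\mu_K}$ is constant along every key row, so the last two terms are identical for all keys and cancel inside the column-wise softmax normalization; the effective logit matrix is therefore $\frac1{r\sqrt d}(\bm{Q}\bm{K}^\top-\bm{\mu_Q}\bm{K}^\top)$.

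The third step exploits $\exp(x/r)=(\exp x)^{1/r}$: the $\bm{Q}\bm{K}^\top$ part of the exponent reproduces the unnormalized weights of ${\rm attn}_{\rm exp}(\bm{Q},\bm{K},\bm{V})$ raised to the power $1/r$, while the $\bm{\mu_Q}\bm{K}^\top$ part, which depends only on the key index, contributes a per-key factor equal (up to its normalizing constant) to ${\rm Softmax}(\bm{\mu_Q}\bm{K}^\top/\sqrt d)^{-1/r}$. I would then apply the elementary approximation $\bigl(\sum_j a_j\bm{V}_j\bigr)^{1/r}\approx\sum_j a_j^{1/r}\bm{V}_j^{1/r}$ to put ${\rm RMFA}_{\rm exp}(\bm{Q},\bm{K},\bm{V})^{1/r}$ into the same summation form, pull the per-key factor through the sum to leading order, and split $\bm{V}_j=\bm{V}_j^{1/r}\cdot\bm{V}_j^{(r-1)/r}$. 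Matching the two expressions, the residual factor collapses to ${\rm Softmax}(\bm{\mu_Q}\bm{K}^\top/\sqrt d)^{1/r}/\bm{V}^{(r-1)/r}=t$, which gives the stated relation.

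The hard part will be the last two steps, because neither the softmax normalization nor the contraction with $\bm{V}$ commutes with raising to the power $1/r$; the argument is essentially a careful accounting of which sub-terms are kept exactly and which are absorbed into ``$\approx$'' (the per-key reweighting and the non-multiplicativity of $(\sum\cdot)^{1/r}$ being the chief offenders), together with the verification that what is left over is precisely $1/t$ rather than some other data-dependent factor. A secondary subtlety is the broadcasting convention: $\bm{\mu_Q},\bm{\mu_K},\bm{\sigma_Q},\bm{\sigma_K}$ are per-feature statistics while the $\|\cdot\|_2$ of Step~2 is taken over a different axis, and this has to be handled correctly both to see that the $\bm{\mu_K}$ cross-terms drop out of the softmax and to see that the various normalizing constants regroup into the single scalar $r$.
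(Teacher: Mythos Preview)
Your plan is essentially the paper's own argument: pass from ${\rm RMFA}_{\rm exp}$ to ${\rm attn}_{\rm exp}$ via Theorem~\ref{expTheo} on both ends, rewrite $\bm{Q}^{SBN}\bm{K}^{SBN\top}$ as $\tfrac1r(\bm{Q}-\bm{\mu_Q})(\bm{K}-\bm{\mu_K})^\top$, drop the $\bm{\mu_K}$ cross-terms by softmax shift-invariance, factor $\exp(\cdot/r)=(\exp\cdot)^{1/r}$ into a $\bm{Q}\bm{K}^\top$ piece and a $\bm{\mu_Q}\bm{K}^\top$ piece, and then pull the $1/r$-th power through the weighted sum with $\bm{V}$ to isolate $1/t$. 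The paper writes all of the post-expansion steps as equalities rather than approximations, so your explicit flagging of ``$(\sum a_j\bm{V}_j)^{1/r}\approx\sum a_j^{1/r}\bm{V}_j^{1/r}$'' and of the broadcasting conventions is, if anything, more candid than the original about where the formal manipulation lives.
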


\begin{proof}
\begin{equation}
\begin{split}
&\quad {\rm RMFA}_{\rm exp}(\bm{Q}^{SBN}\!\!\!,\bm{K}^{SBN}\!\!\!,\bm{V})\!\!\approx{\!{\rm attn}_{\rm exp}(\bm{Q}^{SBN}\!\!\!,\bm{K}^{SBN}\!\!\!,\bm{V})} \\[6pt]
&={\rm Softmax}\left [ \frac{(\bm{Q}-\bm{\mu_Q})(\bm{K}-\bm{\mu_K})^\top }{\left \| \bm{Q} \right \|_2 \left \| \bm{K} \right \|_2 \sqrt{(\bm{\sigma_Q}+\bm{\varepsilon} )(\bm{\sigma_K}+\bm{\varepsilon})}} \bigg/\sqrt d\right ] \bm{V}\\[6pt]
&={\sum_{i=1}^{n}\frac{{\rm exp}\left [ \left ( {\bm{Q}\bm{K}_i^\top\!-\!\bm{\mu_Q}\bm{K}_i^\top\!-\!\bm{Q}\bm{\mu_K}^\top\!+\!\bm{\mu_Q}\bm{\mu_K}^\top}\right )/{r\sqrt{d}}  \right ]  \bm{V} }{ \sum_{j=1}^{n}\!{\rm exp}\!\left [ \left ( {\bm{Q} \bm{K}_j^\top\!-\!\bm{\mu_Q}\bm{K}_j^\top\!-\!\bm{Q}\bm{\mu_K}^\top\!+\!\bm{\mu_Q}\bm{\mu_K}^\top}\right ) \!\!/{r\sqrt{d}} \right ] } }\\[6pt]
&=\sqrt[r]{ {\sum_{i=1}^{n}\!\frac{{\rm exp}\!\left ( {\bm{Q} \bm{K}_i^\top}/{\sqrt{d}}\right )}{ \sum_{j=1}^{n}\!{\rm exp}\!\left ( \frac{\bm{Q} \bm{K}_j^\top}{\sqrt{d}}\right )}} {\bigg/{\sum_{i=1}^{n}\!\frac{{\rm exp}\!\left ( {\bm{\mu_Q} \bm{K}_i^\top}/{\sqrt{d}}\right )}{ \sum_{j=1}^{n}\!{\rm exp}\!\left ( \frac{\bm{\mu_Q}\bm{K}_j^\top}{\sqrt{d}}\right )}} } \bm{V}^r}\\[6pt]
&=\frac{1}{t}\!\sqrt[r]{{\sum_{i=1}^{n}\!\!\frac{{\rm exp}\!\!\left ( {\,\bm{Q} \bm{K}_i^\top}/{\sqrt{d}}\right )}{ \sum_{j=1}^{n}\!\!{\rm exp}\!\!\left ( \frac{\bm{Q} \bm{K}_j^\top}{\sqrt{d}}\right )}}\bm{V}} \!\!=\!\!\frac{1}{t}\!{\left [ {\rm Softmax}\!\!\left ( \frac{\bm{Q} \bm{K}^\top}{\sqrt d} \right ) \!\!\bm{V}\! \right ] }^\frac{1}{r} \\[6pt]
&=\frac{1}{t} {{\rm attn}_{\rm exp}(\bm{Q},\bm{K},\bm{V})}^\frac{1}{r}\approx{\frac{1}{t} {{\rm RMFA}_{\rm exp}(\bm{Q},\bm{K},\bm{V})}^\frac{1}{r}}.
\end{split}
\nonumber
\end{equation}
\end{proof}

\begin{figure*}[ht]
\centering
  \subfloat[Loss]
  {
      \label{ppsbn:subfig1}\includegraphics[width=0.32\textwidth]{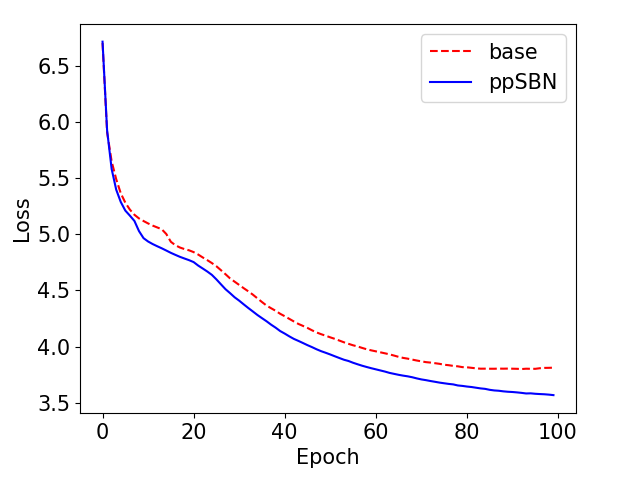}
  }
  \subfloat[PPL]
  {
      \label{ppsbn:subfig2}\includegraphics[width=0.32\textwidth]{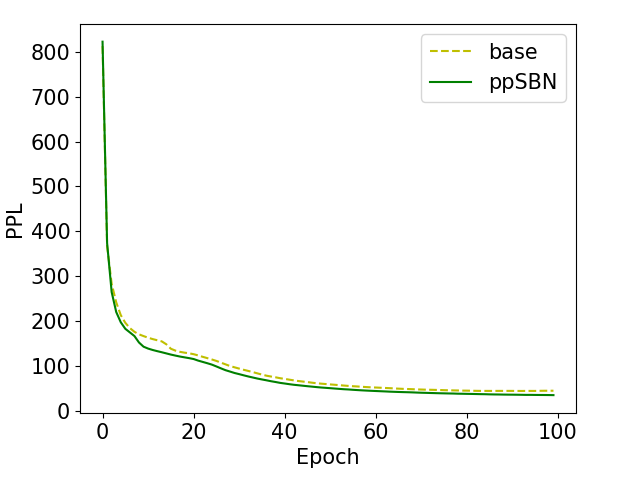}
  }
  \subfloat[BLEU]
  {
      \label{ppsbn:subfig3}\includegraphics[width=0.32\textwidth]{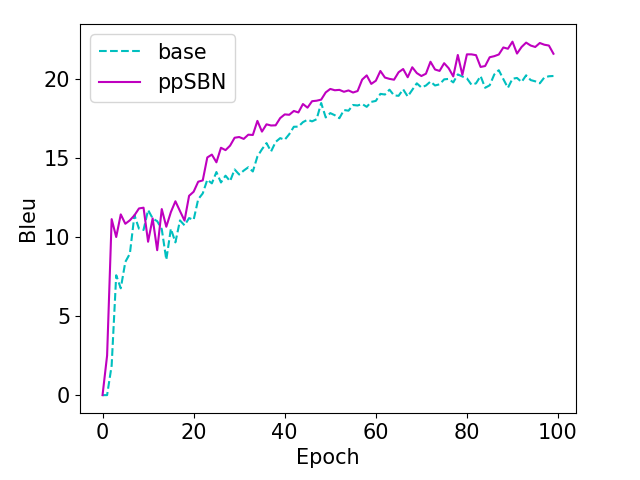}
  }
\caption{The loss, perplexity, and Bleu scores of the traditional Transformer with and without ppSBN across training epochs. In each plot, solid lines represent the Transformer with ppSBN, while dashed lines represent the Transformer without ppSBN.}
\label{ppsbnResult}
\end{figure*}

The trainable parameters \( \gamma \) and \( \beta \) in Algorithm \ref{ppSBN} fit the \( t \) and \( r \) in Theorem \ref{ppTheo}, thus preserving the scale of data. Constraining the scale of the data not only provides better theoretical guarantees for RMFA but also allows us to select kernel functions with a limited domain, as the kernel referred in Table \ref{kernels}. 

Additionally, the regularization effect limits model complexity, theoretically enhancing its generalization further by avoiding over-fit. In next section, we will validate the effectiveness of ppSBN and extend it to any dot-product kernelized attention.

\section{Experiment}

In the experiments, we sequentially validate the effectiveness of RMFA, ppSBN, and Macformer, and their consistency with our theoretical analysis. For all experiments, we set the hyperparameter \( p = 2 \).

\subsection{Evaluation of RMFA}

To evaluate the approximation effectiveness of RMFA for kernelized attention, we conducted simulation experiments using \( \mathcal{K}_{\text{exp}} \) on the generated dataset. We randomly generate data of size $16\times 8$ (simulating a batch size of $16$ and $8$ attention heads), with fixed data dimension \( d = 64 \), and lengths ranging from $200$ to $4000$ uniformly. The dimension of the random feature mapping grows exponentially with a base of $2$. Therefore, the feature mapping here is \( \Phi :(16\times 8\times length\times 64)\to (16\times 8\times length\times D) \).

Each generated data is first preprocessed using pre-SBN (with $\varepsilon =10^{-12}$) and then fed into both Softmax attention and ${\rm RMFA}_{\rm exp}$, yielding their output results and computation time. Each \( (length, D) \) pair undergoes 100 repeated experiments, and the average is taken. We use logarithmized Normalized Mean Squared Error (NMSE) to measure the error of RMFA, and the logarithmized acceleration ratio to measure the relative speed of RMFA to Softmax attention.

\begin{figure}[ht]
\centering
\subfloat[Error]
  {
      \label{layerResult:subfig1}\includegraphics[width=0.5\columnwidth]{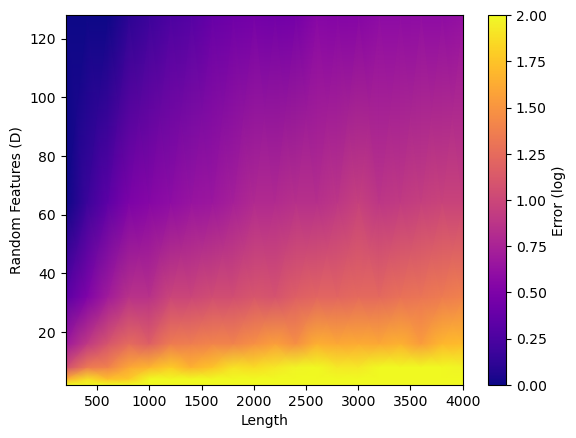}
  }
  \subfloat[Acceleration]
  {
      \label{layerResult:subfig2}\includegraphics[width=0.5\columnwidth]{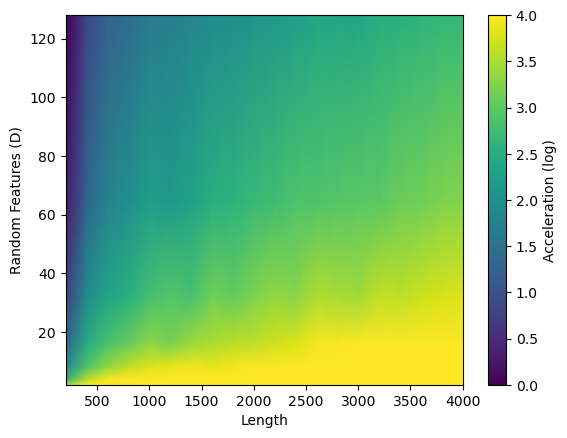}
  }
\caption{The error (a) and acceleration (b) of ${\rm RMFA}_{\rm exp}$ compared to Softmax attention for different sequence lengths and values of \( D \). The data in the figure has been subjected to smoothing, where darker colors represent smaller data values and lighter colors represent larger data values.}
\label{layerResult}
\end{figure}

Figure \ref{layerResult:subfig1} illustrates the error introduced by random features in RMFA. With the length fixed, increasing \( D \) leads to a rapid decrease in error, but this reduction comes at the cost of diminishing marginal returns, consistent with our Theorem \ref{pacTheo}. However, as the length increases, the error also increases. This is because longer data introduces more randomness in the random feature mapping, necessitating an increase in the mapping dimension \( D \) to maintain approximation accuracy.

In Figure \ref{layerResult:subfig2}, the speed of RMFA is superior to Softmax attention in all cases. As the dimension \( D \) increases, the computational cost of RMFA increases, leading to a decrease in the acceleration ratio. However, with \( D \) fixed, as the sequence length increases, the acceleration ratio significantly increases, corroborating our analysis before: RMFA reduces computational overhead for longer sequence lengths.

\begin{table*}[htb]
\renewcommand{\arraystretch}{1.1}
\centering
\resizebox{\textwidth}{!}{
    \begin{tabular}{ccccrcccrccc}
    \toprule
    \multirow{2}[4]{*}{\textbf{Model}} & \multicolumn{3}{c}{\textbf{LRA Text}} &       & \multicolumn{3}{c}{\textbf{LRA Listops}} &       & \multicolumn{3}{c}{\textbf{LRA Retrieval}} \\
\cmidrule{2-4}\cmidrule{6-8}\cmidrule{10-12}          & \textbf{Time} & \textbf{Memory} & \textbf{Accuracy} &       & \textbf{Time} & \textbf{Memory} & \textbf{Accuracy} &       & \textbf{Time} & \textbf{Memory} & \textbf{Accuracy} \\
    \midrule
    Transformer & 1.000  & 1.000  & 63.310  &       & 1.000  & 1.000  & 37.300  &       & 1.000  & 1.000  & 75.040  \\
    \midrule
    ${\rm Transformer}_{\rm RFA}$   & 0.783  & 1.367  & \underline{65.220}  &       & 0.836  & 1.360  & 37.150  &       & 0.900  & 2.389  & \underline{77.840}  \\
    ${\rm Macformer}_{\rm exp}$   & \underline{0.311}  & 1.339  & 64.060  &       & 1.057  & 2.564  & \underline{38.360}  &       & 0.769  & 1.890  & 70.740  \\
    ${\rm Macformer}_{\rm inv}$   & \underline{0.311}  & 1.339  & 63.860  &       & 0.847  & 2.564  & 37.950  &       & 0.823  & 1.890  & 70.840  \\
    ${\rm Macformer}_{\rm trigh}$ & \underline{0.311}  & 1.339  & 64.060  &       & \underline{0.647}  & 2.564  & \underline{38.360}  &       & 0.800  & 1.890  & 74.230  \\
    ${\rm Macformer}_{\rm log}$   & 0.710  & 1.339  & 63.960  &       & 0.751  & 2.564  & 37.550  &       & \underline{0.348}  & 1.890  & 70.730  \\
    ${\rm Macformer}_{\rm sqrt}$  & 0.734  & 1.339  & 63.840  &       & 0.839  & 2.564  & \underline{38.360}  &       & 0.727  & 1.890  & 70.080  \\
    \bottomrule
    \end{tabular}%
}
  \caption{Experimental results on three LRA benchmark tasks. We compared the models of Softmax attention (i.e., Transformer), RFA, and five kinds of Macformer with different dot-product kernels. We report the training time, memory usage and accuracy, with the time and memory data normalized to the base Transformer.}
\label{macformerTable} 
\end{table*}

\subsection{Toy Experiment for ppSBN}

Furthermore, we conducted a toy experiment to assess the impact of ppSBN on entire model. We choose machine translation experiment on Multi30K \cite{multi30k} dataset as the test task, where we used the traditional Transformer as the base model and incorporated the ppSBN mechanism before and after the attention layer for comparison. The loss function chosen is cross-entropy.

The experimental results depicted in Figure \ref{ppsbnResult} demonstrate that ppSBN outperforms the base model across three metrics. Specifically, ppSBN performs comparably to the base model in terms of perplexity (\ref{ppsbn:subfig2}), exhibiting only a slight advantage. However, in terms of loss (\ref{ppsbn:subfig1}) and Bleu (\ref{ppsbn:subfig3}) scores, ppSBN exhibits a notable advantage over the base model. This indicates that ppSBN serving as a method to constrain the input space to ensure the approximation of RMFA, also enhances model training effectiveness due to its parameter regularization effect.

\subsection{Macformer on LRA benchmark}

In our third experiment, we evaluated the performance of Macformer on the LRA benchmark \cite{lra}. The LRA is a standard test used to assess and compare different Transformer models' ability to handle long sequence data. We organized our experiments around three tasks from this benchmark: byte-level text classification, long listops, and byte-level document retrieval.

\begin{itemize}
\item Text: The byte-level text classification task uses the character-level IMDb movie review dataset to evaluate a model's ability to handle long documents. This task requires the model to address compositionality issues, where characters form words and then higher-level phrases, ultimately completing a binary text classification task.
\item Listops: The long listops task assesses a model's ability to handle hierarchically structured data. The input is a string representing a series of operations on single-digit integers, including MAX, MEAN, MEDIAN, and nested structures. The model needs to output the computed result, which is also a single-digit integer.
\item Retrieval: The byte-level document retrieval task requires the model to determine whether there is a citation relationship between two documents. The task involves compressing the representations of two documents and calculating their similarity score through a linear classifier, resulting in a binary classification output.
\end{itemize}

We conducted all experiments on one NVIDIA RTX A6000 48G, working on a baseline model implemented by \citet{skyformer} with PyTorch. The Macformer model we used shares the same parameter settings with the Transformer model. Our model has an embedding dimension of $64$, a hidden dimension of $128$, $2$ layers, and uses $2$ attention heads. The batch size is selected depend on the task: $16$ for Text and Retrieval, and $32$ for Listops. The random projection dimension of Macformer is set to $128$, and the $\epsilon$ for ppSBN is set to $1e\!\!-\!\!13$. Each training session involves $1000$ steps of initialization and $10000$ steps of optimization. We recorded the total training time, maximum memory usage, and final accuracy as result, and averaged these metrics across different random seeds.

Table \ref{macformerTable} presents the experimental results on the LRA benchmark. In the LRA Text task, RFA achieved the highest accuracy, but ${\rm Macformer}_{\rm exp}$, ${\rm Macformer}_{\rm inv}$, and ${\rm Macformer}_{\rm trigh}$ significantly reduced time consumption while maintaining accuracy close to RFA. ${\rm Macformer}_{\rm log}$ and ${\rm Macformer}_{\rm sqrt}$ also demonstrated relatively better time and accuracy compared to the base Transformer. In the LRA Listops task, ${\rm Macformer}_{\rm exp}$, ${\rm Macformer}_{\rm trigh}$, and ${\rm Macformer}_{\rm sqrt}$ achieved the best accuracy, with ${\rm Macformer}_{\rm trigh}$ also having the optimal time consumption, but ${\rm Macformer}_{\rm exp}$ did not improve training speed. In the LRA Retrieval task, ${\rm Macformer}_{\rm log}$ achieved the best training speed, albeit with a slight reduction in accuracy. ${\rm Macformer}_{\rm exp}$, ${\rm Macformer}_{\rm inv}$, ${\rm Macformer}_{\rm trigh}$, and ${\rm Macformer}_{\rm sqrt}$ also showed improved speed, with ${\rm Macformer}_{\rm trigh}$ maintaining a higher accuracy. Additionally, we found that the memory consumption of Macformer models with different kernel functions remains consistently the same. We believe this is because changing the kernel function only affects the selection of \(a_N\) in RMF, leading to a constant-level impact on the model.

The experimental results above validate our previous point: different similarity functions \( \mathcal{K}(\cdot) \) are suitable for different application scenarios. For each task, Macformer with different kernel functions exhibited significantly varying performance metrics. Therefore, one can choose Macformer with different kernel functions based on the application scenario and performance requirements.

\section{Conclusion}

We propose Macformer, a Transformer architecture based on RMF approximating dot-product kernels. Our analysis and experiments collectively demonstrate that Macformer exhibits superior efficiency when handling long sequence data, with performance backed by theoretical guarantees. Moreover, Macformer offers flexibility by allowing the selection of different kernelized attention based on application scenarios.

Future research directions may include exploring the effects of more dot-product kernels, determining how to select the optimal \( \mathcal{K}(\cdot) \) based on the application scenario of Macformer, and reducing the memory consumption of Macformer. 

\bibliography{aaai25}

\begin{thebibliography}{39}
\providecommand{\natexlab}[1]{#1}

\bibitem[{Avron, Clarkson, and Woodruff(2016)}]{randomSpeed1}
Avron, H.; Clarkson, K.~L.; and Woodruff, D.~P. 2016.
\newblock Faster Kernel Ridge Regression Using Sketching and Preconditioning.
\newblock \emph{ArXiv}, abs/1611.03220.

\bibitem[{Bahdanau, Cho, and Bengio(2014)}]{attention}
Bahdanau, D.; Cho, K.; and Bengio, Y. 2014.
\newblock Neural machine translation by jointly learning to align and translate.
\newblock \emph{arXiv preprint arXiv:1409.0473}.

\bibitem[{Bao et~al.(2021)Bao, Dong, Piao, and Wei}]{cv2}
Bao, H.; Dong, L.; Piao, S.; and Wei, F. 2021.
\newblock Beit: Bert pre-training of image transformers.
\newblock In \emph{arXiv preprint arXiv:2106.08254}.

\bibitem[{Carratino, Rudi, and Rosasco(2018)}]{randomFeature2}
Carratino, L.; Rudi, A.; and Rosasco, L. 2018.
\newblock Learning with sgd and random features.
\newblock \emph{Advances in neural information processing systems}, 31.

\bibitem[{Chaudhari et~al.(2021)Chaudhari, Mithal, Polatkan, and Ramanath}]{attentionSurvey}
Chaudhari, S.; Mithal, V.; Polatkan, G.; and Ramanath, R. 2021.
\newblock An attentive survey of attention models.
\newblock \emph{ACM Transactions on Intelligent Systems and Technology (TIST)}, 12(5): 1--32.

\bibitem[{Chen et~al.(2021)Chen, Zeng, Ji, and Yang}]{skyformer}
Chen, Y.; Zeng, Q.; Ji, H.; and Yang, Y. 2021.
\newblock Skyformer: Remodel Self-Attention with Gaussian Kernel and Nystr\textbackslash "om Method.
\newblock In Ranzato, M.; Beygelzimer, A.; Dauphin, Y.; Liang, P.; and Vaughan, J.~W., eds., \emph{Advances in Neural Information Processing Systems}, volume~34, 2122--2135. Curran Associates, Inc.

\bibitem[{Child et~al.(2019)Child, Gray, Radford, and Sutskever}]{rel7}
Child, R.; Gray, S.; Radford, A.; and Sutskever, I. 2019.
\newblock Generating long sequences with sparse transformers.
\newblock \emph{arXiv preprint arXiv:1904.10509}.

\bibitem[{Choromanski et~al.(2021)Choromanski, Likhosherstov, Dohan, Song, Gane, Sarlos, Hawkins, Davis, Mohiuddin, Kaiser, Belanger, Colwell, and Weller}]{performer}
Choromanski, K.~M.; Likhosherstov, V.; Dohan, D.; Song, X.; Gane, A.; Sarlos, T.; Hawkins, P.; Davis, J.~Q.; Mohiuddin, A.; Kaiser, L.; Belanger, D.~B.; Colwell, L.~J.; and Weller, A. 2021.
\newblock Rethinking Attention with Performers.
\newblock In \emph{International Conference on Learning Representations}.

\bibitem[{Dosovitskiy et~al.(2020)Dosovitskiy, Beyer, Kolesnikov, Weissenborn, Zhai, Unterthiner, Dehghani, Minderer, Heigold, Gelly et~al.}]{cv1}
Dosovitskiy, A.; Beyer, L.; Kolesnikov, A.; Weissenborn, D.; Zhai, X.; Unterthiner, T.; Dehghani, M.; Minderer, M.; Heigold, G.; Gelly, S.; et~al. 2020.
\newblock An image is worth 16x16 words: Transformers for image recognition at scale.
\newblock In \emph{arXiv preprint arXiv:2010.11929}.

\bibitem[{Elliott et~al.(2016)Elliott, Frank, Sima'an, and Specia}]{multi30k}
Elliott, D.; Frank, S.; Sima'an, K.; and Specia, L. 2016.
\newblock Multi30k: Multilingual english-german image descriptions.
\newblock \emph{arXiv preprint arXiv:1605.00459}.

\bibitem[{Fedus, Zoph, and Shazeer(2022)}]{nlp2}
Fedus, W.; Zoph, B.; and Shazeer, N. 2022.
\newblock Switch transformers: Scaling to trillion parameter models with simple and efficient sparsity.
\newblock \emph{Journal of Machine Learning Research}, 23(120): 1--39.

\bibitem[{Fu et~al.(2021)Fu, Li, Gan, Lin, Wang, Wang, and Liu}]{mm2}
Fu, T.-J.; Li, L.; Gan, Z.; Lin, K.; Wang, W.~Y.; Wang, L.; and Liu, Z. 2021.
\newblock Violet: End-to-end video-language transformers with masked visual-token modeling.
\newblock \emph{arXiv preprint arXiv:2111.12681}.

\bibitem[{Guo et~al.(2022{\natexlab{a}})Guo, Han, Wu, Tang, Chen, Wang, and Xu}]{cv3}
Guo, J.; Han, K.; Wu, H.; Tang, Y.; Chen, X.; Wang, Y.; and Xu, C. 2022{\natexlab{a}}.
\newblock Cmt: Convolutional neural networks meet vision transformers.
\newblock In \emph{Proceedings of the IEEE/CVF conference on computer vision and pattern recognition}, 12175--12185.

\bibitem[{Guo et~al.(2022{\natexlab{b}})Guo, Xiao, Li, He, Li, Wu, He, Chen, Chen, Xiang et~al.}]{sp2}
Guo, J.; Xiao, N.; Li, H.; He, L.; Li, Q.; Wu, T.; He, X.; Chen, P.; Chen, D.; Xiang, J.; et~al. 2022{\natexlab{b}}.
\newblock Transformer-based high-frequency oscillation signal detection on magnetoencephalography from epileptic patients.
\newblock \emph{Frontiers in Molecular Biosciences}, 9: 822810.

\bibitem[{Ho et~al.(2019)Ho, Kalchbrenner, Weissenborn, and Salimans}]{rel9}
Ho, J.; Kalchbrenner, N.; Weissenborn, D.; and Salimans, T. 2019.
\newblock Axial attention in multidimensional transformers.
\newblock \emph{arXiv preprint arXiv:1912.12180}.

\bibitem[{Huang et~al.(2020)Huang, Zeng, Liu, Fu, and Fu}]{mm1}
Huang, Z.; Zeng, Z.; Liu, B.; Fu, D.; and Fu, J. 2020.
\newblock Pixel-bert: Aligning image pixels with text by deep multi-modal transformers.
\newblock \emph{arXiv preprint arXiv:2004.00849}.

\bibitem[{Ioffe and Szegedy(2015)}]{batchNorm}
Ioffe, S.; and Szegedy, C. 2015.
\newblock Batch Normalization: Accelerating Deep Network Training by Reducing Internal Covariate Shift.
\newblock \emph{JMLR.org}.

\bibitem[{Islam et~al.(2023)Islam, Elmekki, Elsebai, Bentahar, Drawel, Rjoub, and Pedrycz}]{Asurvey}
Islam, S.; Elmekki, H.; Elsebai, A.; Bentahar, J.; Drawel, N.; Rjoub, G.; and Pedrycz, W. 2023.
\newblock A comprehensive survey on applications of transformers for deep learning tasks.
\newblock \emph{Expert Systems with Applications}, 122666.

\bibitem[{Kar and Karnick(2012)}]{rmf}
Kar, P.; and Karnick, H. 2012.
\newblock Random feature maps for dot product kernels.
\newblock In \emph{Artificial intelligence and statistics}, 583--591. PMLR.

\bibitem[{Lample and Conneau(2019)}]{nlp1}
Lample, G.; and Conneau, A. 2019.
\newblock Cross-lingual language model pretraining.
\newblock \emph{arXiv preprint arXiv:1901.07291}.

\bibitem[{Liu et~al.(2021)Liu, Lin, Cao, Hu, Wei, Zhang, Lin, and Guo}]{rel2}
Liu, Z.; Lin, Y.; Cao, Y.; Hu, H.; Wei, Y.; Zhang, Z.; Lin, S.; and Guo, B. 2021.
\newblock Swin transformer: Hierarchical vision transformer using shifted windows.
\newblock In \emph{Proceedings of the IEEE/CVF international conference on computer vision}, 10012--10022.

\bibitem[{Lu et~al.(2024)Lu, Zhang, Zhu, Feng, Xiang, and Zhang}]{rel1}
Lu, J.; Zhang, J.; Zhu, X.; Feng, J.; Xiang, T.; and Zhang, L. 2024.
\newblock Softmax-free linear transformers.
\newblock \emph{International Journal of Computer Vision}, 1--20.

\bibitem[{Luo(2023)}]{rel4}
Luo, S. 2023.
\newblock Automatic speech recognition with efficient transformer.
\newblock In \emph{Fourth International Conference on Signal Processing and Computer Science (SPCS 2023)}, volume 12970, 677--682. SPIE.

\bibitem[{Nguyen, Joshi, and Salim(2024)}]{spectraformer}
Nguyen, D.; Joshi, A.; and Salim, F.~D. 2024.
\newblock Spectraformer: A Unified Random Feature Framework for Transformer.
\newblock \emph{ArXiv}, abs/2405.15310.

\bibitem[{Ouyang et~al.(2022)Ouyang, Wu, Jiang, Almeida, Wainwright, Mishkin, Zhang, Agarwal, Slama, Ray et~al.}]{nlp3}
Ouyang, L.; Wu, J.; Jiang, X.; Almeida, D.; Wainwright, C.; Mishkin, P.; Zhang, C.; Agarwal, S.; Slama, K.; Ray, A.; et~al. 2022.
\newblock Training language models to follow instructions with human feedback.
\newblock \emph{Advances in neural information processing systems}, 35: 27730--27744.

\bibitem[{Peng et~al.(2021)Peng, Pappas, Yogatama, Schwartz, Smith, and Kong}]{rfa}
Peng, H.; Pappas, N.; Yogatama, D.; Schwartz, R.; Smith, N.~A.; and Kong, L. 2021.
\newblock Random feature attention.
\newblock \emph{arXiv preprint arXiv:2103.02143}.

\bibitem[{Qiu et~al.(2019)Qiu, Ma, Levy, Yih, Wang, and Tang}]{rel8}
Qiu, J.; Ma, H.; Levy, O.; Yih, S. W.-t.; Wang, S.; and Tang, J. 2019.
\newblock Blockwise self-attention for long document understanding.
\newblock \emph{arXiv preprint arXiv:1911.02972}.

\bibitem[{Rahimi and Recht(2007)}]{randomFeature1}
Rahimi, A.; and Recht, B. 2007.
\newblock Random features for large-scale kernel machines.
\newblock \emph{Advances in neural information processing systems}, 20.

\bibitem[{Schmid, Koutini, and Widmer(2023)}]{rel5}
Schmid, F.; Koutini, K.; and Widmer, G. 2023.
\newblock Efficient large-scale audio tagging via transformer-to-cnn knowledge distillation.
\newblock In \emph{ICASSP 2023-2023 IEEE International Conference on Acoustics, Speech and Signal Processing (ICASSP)}, 1--5. IEEE.

\bibitem[{Sun(2019)}]{randomSpeed2}
Sun, Y. 2019.
\newblock \emph{Random Features Methods in Supervised Learning}.
\newblock Ph.D. thesis, The University of Michigan.

\bibitem[{Tay et~al.(2021)Tay, Dehghani, Abnar, Shen, Bahri, Pham, Rao, Yang, Ruder, and Metzler}]{lra}
Tay, Y.; Dehghani, M.; Abnar, S.; Shen, Y.; Bahri, D.; Pham, P.; Rao, J.; Yang, L.; Ruder, S.; and Metzler, D. 2021.
\newblock Long Range Arena : A Benchmark for Efficient Transformers.
\newblock In \emph{ICLR 2021}.

\bibitem[{Tsai et~al.(2019)Tsai, Bai, Yamada, Morency, and Salakhutdinov}]{dissection}
Tsai, Y.-H.~H.; Bai, S.; Yamada, M.; Morency, L.-P.; and Salakhutdinov, R. 2019.
\newblock Transformer Dissection: An Unified Understanding for Transformer’s Attention via the Lens of Kernel.
\newblock \emph{ArXiv}, abs/1908.11775.

\bibitem[{Vaswani et~al.(2017)Vaswani, Shazeer, Parmar, Uszkoreit, Jones, Gomez, Kaiser, and Polosukhin}]{transformer}
Vaswani, A.; Shazeer, N.; Parmar, N.; Uszkoreit, J.; Jones, L.; Gomez, A.~N.; Kaiser, {\L}.; and Polosukhin, I. 2017.
\newblock Attention is all you need.
\newblock \emph{Advances in neural information processing systems}, 30.

\bibitem[{Wang et~al.(2022)Wang, Yang, Hu, Li, Lin, Gan, Liu, Liu, and Wang}]{mm3}
Wang, J.; Yang, Z.; Hu, X.; Li, L.; Lin, K.; Gan, Z.; Liu, Z.; Liu, C.; and Wang, L. 2022.
\newblock Git: A generative image-to-text transformer for vision and language.
\newblock \emph{arXiv preprint arXiv:2205.14100}.

\bibitem[{Wang et~al.(2020)Wang, Li, Khabsa, Fang, and Ma}]{linformer}
Wang, S.; Li, B.~Z.; Khabsa, M.; Fang, H.; and Ma, H. 2020.
\newblock Linformer: Self-Attention with Linear Complexity.
\newblock \emph{ArXiv}, abs/2006.04768.

\bibitem[{Xu et~al.(2021)Xu, Zhang, Zhang, Yang, and Li}]{rel6}
Xu, Z.; Zhang, W.; Zhang, T.; Yang, Z.; and Li, J. 2021.
\newblock Efficient transformer for remote sensing image segmentation.
\newblock \emph{Remote Sensing}, 13(18): 3585.

\bibitem[{Yan et~al.(2022)Yan, Li, Xu, Yu, and Xu}]{sp1}
Yan, J.; Li, J.; Xu, H.; Yu, Y.; and Xu, T. 2022.
\newblock Seizure prediction based on transformer using scalp electroencephalogram.
\newblock \emph{Applied Sciences}, 12(9): 4158.

\bibitem[{Yao and Shao(2023)}]{rel3}
Yao, D.; and Shao, Y. 2023.
\newblock A data efficient transformer based on Swin Transformer.
\newblock \emph{The Visual Computer}, 1--10.

\bibitem[{Zhou et~al.(2020)Zhou, Zhang, Peng, Zhang, Li, Xiong, and Zhang}]{informer}
Zhou, H.; Zhang, S.; Peng, J.; Zhang, S.; Li, J.; Xiong, H.; and Zhang, W. 2020.
\newblock Informer: Beyond Efficient Transformer for Long Sequence Time-Series Forecasting.
\newblock In \emph{AAAI Conference on Artificial Intelligence}.

\end{thebibliography}

\end{document}